\newtheorem{theorem}{Theorem}
\newtheorem{corollary}{Corollary}
\newtheorem{example}{Example}
\newtheorem{lemma}{Lemma}
\newtheorem{remark}{Remark}
\numberwithin{equation}{section}
\DeclareMathOperator{\argmin}{argmin}
\DeclareMathOperator{\rank}{rank}
\DeclareMathOperator{\diag}{diag}
\renewcommand{\t}{{t}}
\renewcommand{\u}{{u}}
\renewcommand{\v}{{v}}
\title{On the Complexity of Robust PCA and \\ $\ell_1$-Norm Low-Rank Matrix Approximation} 
\author{
Nicolas Gillis\thanks{
Department of Mathematics and Operational Research, 
University of Mons, 
Rue de Houdain 9, 
7000 Mons, 
Belgium, 
 nicolas.gillis@umons.ac.be.}  \and 
Stephen A. Vavasis\thanks{Department of Combinatorics and Optimization, 
University of Waterloo,  
200 University Avenue W., 
Waterloo, ON N2L 3G1, Canada, 
vavasis@uwaterloo.ca.} 
}
\date{}
\begin{document}
\maketitle

\begin{abstract}
The low-rank matrix approximation problem with respect to the component-wise $\ell_1$-norm ($\ell_1$-LRA), which is closely related to robust principal component analysis (PCA), has become a very popular tool in data mining and machine learning. 
Robust PCA aims at recovering a low-rank matrix that was perturbed with sparse noise, with applications for example in foreground-background video separation. 
Although $\ell_1$-LRA is strongly believed to be NP-hard, there is, to the best of our knowledge, no formal proof of this fact. 
In this paper, we prove that $\ell_1$-LRA is NP-hard, already in the rank-one case, using a reduction from MAX CUT. 
Our derivations draw interesting connections between $\ell_1$-LRA and several other well-known problems, namely, robust PCA, 
$\ell_0$-LRA, binary matrix factorization, 
a particular densest bipartite subgraph problem, 
the computation of the cut norm of $\{-1,+1\}$ matrices, and the discrete basis problem, 
which we all prove to be NP-hard. 
\end{abstract}

\textbf{Keywords.} Robust PCA, low-rank matrix approximations, $\ell_1$ norm, binary matrix factorization, cut norm, computational complexity

\section{Introduction} 

Low-rank matrix approximation is a key problem in data analysis and machine learning. 
It is equivalent to linear dimensionality reduction that approximates a set of data points via a low-dimensional linear subspace. 
Given a matrix $M \in \mathbb{R}^{m \times n}$ and a factorization rank $r \leq \min(m,n)$, the problem can be stated as follows 
\begin{equation} 
\min_{X \in \mathbb{R}^{m \times n}} \quad ||M-X||  \quad \text{ such that } \quad \rank(X) \leq r, \nonumber 
\end{equation}
where $||.||$ is a matrix norm used to measure the error of the approximation.    
Equivalently, the matrix $X$ can be written as the outer product of two matrices and we have  
\begin{equation} 
\min_{U \in \mathbb{R}^{m \times r}, V \in \mathbb{R}^{r \times n}} \quad ||M-UV||  \, .   \nonumber 
\end{equation}  
Typically, the columns of the matrix $M$ represent $n$ data points in a $m$-dimensional space. 
The above decomposition gives $M(:,j) \approx \sum_{k=1}^r U(:,k) V(k,j)$ for all $j$, and hence is a linear and low-dimensional model for the data: the columns of $U$ are the basis of 
the linear subspace while each column of $V$ gives the coordinates in the basis $U$ to approximate each data point.  

The choice of the norm $||.||$ usually depends on the problem at hand and the noise model that is assumed on the input data. 
The most widely used norm is the Frobenius norm: 
\begin{equation} \label{LRA2} 
\min_{U \in \mathbb{R}^{m \times r}, V \in \mathbb{R}^{r \times n}} \quad ||M-UV||_F^2 = \sum_{i,j} (M-UV)_{ij}^2  \, , 
\end{equation} 
which assumes Gaussian noise. The problem~\eqref{LRA2} can be solved via the singular value decomposition (SVD); see \cite{GV96} and the references therein.  It is closely related to principal component analysis (PCA) as both problems are essentially equivalent (in PCA, the data is usually assumed to be mean centered). 
However, in some applications, other metrics might have to be used; here are two important examples: 
\begin{itemize}
\item \emph{Weights and missing data.} Adding weights in the objective function, that is, minimizing $\sum_{i,j} W_{ij} (M-UV)_{ij}^2$ 
for some nonnegative weight matrix  $W \in \mathbb{R}^{m \times n}_+$, allows to take into account different confidence levels among the entries of the input data $M$~\cite{GZ79}, or 
take into account missing entries (corresponding to zero entries of $W$). 
This has applications in machine learning for recommender systems~\cite{KBV09}, in computer vision to recover structure from motion~\cite{SIR95}, and in control for system identification~\cite{MU13, UM14}. 
However, the problem is NP-hard for any fixed factorization rank \cite{GG10c}, even in the rank-one case (that is, for $r= 1$).

\item \emph{Sparse input matrix.} If the input matrix is sparse, which is typical for example in applications involving large graphs and networks or document data sets,  Gaussian noise is not a good model and it makes more sense to minimize for example the (generalized) Kullback-Leibler divergence 
\[
D (M || UV)  = 
\sum_{i,j} \left( M_{ij} \log \left( \frac{M_{ij} }{(UV)_{ij}} \right) - M_{ij} + (UV)_{ij} \right) \, ;  
\]
see, e.g., the discussion in \cite{CK12} and the references therein.

\end{itemize}

Another important example that has been extensively studied is when the noise is sparse. 
In that case, the following problem is often considered: 
\begin{equation} \label{RPCA1} 
\min_{X \in \mathbb{R}^{m \times n}, S \in \mathbb{R}^{m \times n}} \; \rank(X) + \lambda ||S||_0  \quad \text{ such that } \quad M = X + S ,  
\end{equation}
where $||.||_0$ is the $\ell_0$ `norm' defined as 
\[
||S||_0 = \left| \{ (i,j) \ | \ S_{ij} \neq 0\ \} \right|
\]  
that counts the number of nonzero entries in the matrix $S$, and $\lambda > 0$ is a penalty parameter; see, e.g., \cite{WG09} and the references therein. Note that the equality constraint $M = X + S$ can be replaced with $||M-X-S|| \leq \epsilon$ in case some other type of noise is present (e.g., using the Frobenius norm allows to model both Gaussian and sparse noise). 
This problem is sometimes referred to as \emph{robust PCA}~\cite{CLM11}.   
Equivalently, if the rank of $X$ is fixed to $r$, the problem~\eqref{RPCA1} can be written as 
\begin{equation} \label{l0LRA} 
\min_{U \in \mathbb{R}^{m \times r}, V \in \mathbb{R}^{r \times n}} \; ||M-UV||_0 , 
\end{equation}
which we will refer to as $\ell_0$ low-rank matrix approximation ($\ell_0$-LRA). 

Several heuristic algorithms have been proposed for this problem. 
The two main families are the following: 
\begin{enumerate} 
\item \emph{Non-linear optimization-based algorithms}. 
Using the formulation \eqref{l0LRA} and replacing the $\ell_0$ norm by its well-known convex surrogate, the $\ell_1$ norm,  we have 
\begin{equation} \label{l1LRA} 
\min_{U \in \mathbb{R}^{m \times r}, V \in \mathbb{R}^{r \times n}} \; 
||M-UV||_1 = \sum_{i,j} |M_{ij} - (UV)_{ij}|,  
\end{equation}
which we will refer to as $\ell_1$-LRA. 
One can apply standard non-linear optimization schemes to~\eqref{l1LRA}, e.g., 
sequential rank-one updates~\cite{KK03}, 
alternating optimization~\cite{KK05} (a.k.a.\@ coordinate descent), 
the Wiberg algorithm~\cite{EVD10}, 
augmented Lagrangian approaches~\cite{ZLS12}, 
successive projections on hyperplanes and linear programming~\cite{BDB13}, to cite a few. The main drawback of this class of methods is that it does not guarantee to recover the global optimum of~\eqref{l1LRA} and is in general sensitive to initialization. (In fact, we will show that this problem is NP-hard; see Theorem~\ref{th3}.) 



\item \emph{Convexification}. 
Starting from the formulation~\eqref{RPCA1}, the standard convexification approach is to use the $\ell_1$ norm as a proxy for sparsity and the nuclear norm $||.||_*$ 
as a proxy for the rank function~\cite{CLM11}. 
The nuclear norm $||X||_* = \sum_i \sigma_i(X)$ is the sum of the singular values of $X$. 
Denoting $\sigma(X)$ the vector containing the singular values of $X$, we have $\rank(X) = ||\sigma(X)||_0$ and $||X||_* = ||\sigma(X)||_1$ which explains this choice of the nuclear norm: it is the $\ell_1$ norm of the vector of singular values. 
Finally, the difficult combinatorial problem~\eqref{RPCA1} is replaced with the following SDP-representable optimization problem (hence tractable)~\cite{CSPW11}: 
\begin{equation} \label{rpca} 
\min_{X, L} ||X||_* + \lambda ||L||_1 \quad \text{ such that } \quad M = X + L. 
\end{equation} 
Given that the input matrix $M$ has the sought structure (that is, sparse + low-rank) and 
satisfies some additional conditions (e.g., the non-zero entries of the sparse noise are not too numerous and appear randomly among the entries of $M$), 
solving~\eqref{rpca} guarantees to recover the sought solution~\cite{CSPW11, CLM11, XCS12}. 
This model has attracted a lot attention lately, 
both for its theoretical, algorithmic and application-oriented aspects. 

The two main drawback of this approach are that, 
\begin{enumerate}
\item if the input matrix is far from being a low-rank matrix plus sparse noise, 
the solution of~\eqref{rpca} can be very poor, and 
\item it requires to solve an optimization problem in $nm$ variables (for example, in foreground-background video separation, $n$ is the number of pixels and $m$ the number of frames, which can both be rather high). 
\end{enumerate}

More recently, a simple algorithm based on alternating projections 
(alternatively project onto the set of low-rank matrices and sparse matrices) was proved to recover the sought solutions under reasonable conditions~\cite{NNSA14} (similar to that of the convexification-based approaches). 
\end{enumerate}

Problems~\eqref{l0LRA},~\eqref{l1LRA},~\eqref{rpca} and variants have been used for many applications, e.g., 
foreground-background video separation, face recognition, latent semantic indexing, graphical modeling with latent variables, matrix rigidity and composite system identification; see the discussion in~\cite{CLM11, CSPW11, CVL14} 
and the references therein. 
It can also be used to identify large and dense subgraphs in bipartite graphs. 
Let $M$ be the biadjacency matrix of a graph representing the relationships between two groups of objects, e.g., movies vs.\@ users, documents vs.\@ words, or papers vs.\@ authors. 
Let us focus on the movies vs.\@ users example: each row of $M$ corresponds to a movie, each column to a user, and $M_{ij} = 1$ if and only if user $j$ has watched movie~$i$. 
Finding a subset of movies and a subset of users that is fully connected 
(referred to as a biclique) amounts to finding a community (a group of users watching the same movies). 
In the unweighted case, the matrix $M$ is binary and it can be easily checked that a community correspond to a rank-one binary matrix (a rectangle of ones). 
Moreover, in practice, some edges are often missing inside a community (all users have not watched all movies from their community) or some edges between communities might be present (some users might belong to several communities or watch movies from other communities).  
An important problem in this setting is to find the largest community. This can be cast as a rank-one robust PCA problem (see also Section~\ref{sec2}). Let us illustrate this with a simple example. 
\begin{example} \label{ex1} 
Assume we have a single community represented by the following matrix (the first three movies have been watched by the first four users):  
\[
M = \left( \begin{array}{ccccc} 
1 & 1 & 1 & 1 & 0 \\ 
1 & 1 & 1 & 1 & 0 \\ 
1 & 1 & 1 & 1 & 0 \\ 
0 & 0 & 0 & 0 & 0 \\
\end{array} \right) =  
\left( \begin{array}{c} 1 \\ 1 \\ 1 \\ 0 \\ 
\end{array} \right) 
\left( \begin{array}{ccccc} 1 & 1 & 1 & 1 & 0 \\ 
\end{array} \right). 
\] 
In real-world problems, some sparse noise is added to the matrix $M$ (see above). 
Under such perturbations, the optimal rank-one solution of $\ell_2$-LRA~\eqref{LRA2} loses the underlying structure very quickly, even when only a few entries in $M$ are modified:  
for example, adding three edges gives the following optimal rank-one approximation 
\[
\tilde{M} = \left( \begin{array}{ccccc} 
1 & 1 & 1 & 1 & 0 \\ 
1 & 1 & 1 & 1 & 1 \\ 
1 & 1 & 1 & 1 & 0 \\ 
1 & 0 & 0 & 0 & 1 \\
\end{array} \right) \approx 
\left( \begin{array}{ccccc}  
  1.03 & 0.92 & 0.92 & 0.92 & 0.44 \\
  1.15 & 1.02 & 1.02 & 1.02 & 0.50\\
  1.03 & 0.92 & 0.92 & 0.92 & 0.44\\
  0.40 & 0.36 & 0.36 & 0.36 & 0.17\\ 
\end{array} \right)  , 
\] 
while the optimal rank-one solution\footnote{You can run this example with our code available from \url{https://sites.google.com/site/nicolasgillis/code}.} of \eqref{l0LRA} and \eqref{l1LRA} is given by $M$. 
Ames and Vavasis~\cite{AV11}, and Doan and Vavasis~\cite{DV13} studied this particular variant of robust PCA (although they did not call it that) and showed that the convexification approach based on the nuclear norm is able to recover the largest community given that sufficiently few edges are perturbed (either randomly or by an adversary). 
\end{example}

Another closely related class of low-rank matrix approximation problems has also attracted much attention lately, namely 
\[
\min_{X, \rank(X) = r} \; \sum_{i = 1}^n  \big( ||M(:,i) - X(:,i)||_2 \big)^p . 
\] 
For $p=2$, this is $\ell_2$-LRA~\eqref{LRA2}. 
For $1 \leq p \neq 2$, the problem has been shown to be NP-hard~\cite{GRS12, CW15}; 
and approximation algorithms have been proposed; see~\cite{CW15} and the references therein.

\subsection{Contribution and outline of the paper}

Although robust PCA and its variants are widely believed to be NP-hard (see, e.g.,~\cite{K08, WG09}), it has, to the best of our knowledge, never been proved rigorously.  
In this paper, we prove that $\ell_0$-LRA and $\ell_1$-LRA are NP-hard, already in the rank-one case, that is, for $r=1$. 
This solves the first part of the open question~2 in~\cite{W14}.  

In section~\ref{sec2}, we focus on $\ell_0$-LRA~\eqref{l0LRA} of a binary matrix, which we show is equivalent to rank-one binary matrix factorization (BMF). We also show the connection with the problem of finding a large and dense subgraph in a bipartite graph.  
In section~\ref{sec3}, we prove that rank-one BMF is equivalent to the cut norm computation of $\{-1,+1\}$ matrices, which we prove to be NP-hard using an equivalence with the computation of the norm $||.||_{\infty \rightarrow 1}$ and a reduction from MAX CUT (Theorem~\ref{th1}). 
This implies that $\ell_0$-LRA and rank-one BMF are both NP-hard. 
In section~\ref{sec4}, we prove that, for a $\{-1,+1\}$ input matrix, any optimal solution of rank-one $\ell_1$-LRA~\eqref{l1LRA} can be transformed into a rank-one solution with entries in $\{-1,+1\}$ (Theorem~\ref{th2}). 
We also show that, for $\{-1,+1\}$ matrices, rank-one $\ell_1$-LRA is equivalent to the computation of the norm $||.||_{\infty \rightarrow 1}$  which implies NP-hardness of $\ell_1$-LRA (Theorem~\ref{th3}). 
In section~\ref{sec5}, we briefly describe how the complexity results in the rank-one case can be generalized to higher ranks.


\section{Binary matrix factorization and densest bipartite subgraph} \label{sec2}

Let $M \in \{0,1\}^{m \times n}$ be a binary matrix. Rank-one BMF is the problem 
\begin{equation} \label{r1BMF}
\min_{ u \in \{0,1\}^{m}, v \in \{0,1\}^{n} } \; ||M-uv^T||. 
\end{equation} 
BMF was used successfully to mine discrete patterns with applications for example to analyze gene expression data~\cite{SJY09, SLD10, MGT15}. We refer the reader to the tutorial~\url{http://people.mpi-inf.mpg.de/~pmiettin/bmf_tutorial} and the references therein for more details.  Although BMF is conjectured to be NP-hard~\cite{SJY09, MGT15}, there is, to the best of our knowledge, no formal proof of this fact. We will prove in this paper that it is in fact NP-hard. 

For BMF, all component-wise norms, that is, all norms of the form $||M-uv^T|| = \sum_{i,j} f(M_{i,j}, u_i v_j)$ for some function $f$ 
with $f(z, z) = 0$ and $f(z, z') > 0$ for $z' \neq z$, are equivalent since both $M$ and $uv^T$ are binary matrices. 
For such norms, $||M-uv^T||$ amounts to count the number of mismatches between $M$ and $uv^T$ and hence rank-one BMF can be formulated as follows 
\begin{equation} \label{bmf}
\min_{u \in \{0,1\}^{m}, v \in \{0,1\}^{n}} ||M-uv^T||_0 . 
\end{equation}

The matrix $M \in \{0,1\}^{m \times n}$ can be interpreted as the biadjacency of a bipartite graph $G = (S \times T, E)$ 
with 
$S = \{s_1,s_2,\dots, s_m\}$, 
$T = \{t_1,t_2,\dots, t_n\}$, 
and 
$E \subset S \times T$ where $M_{ij} = 1 \iff (s_i,t_j) \in E$. 
Let us denote $E(S',T')$ the number of edges in $G$ in the subgraph induced by $S' \times T'$, and denote $\bar{S'} = S \backslash S'$. 
Then \eqref{bmf} is the problem of finding two subsets $S' \subseteq S$ and $T' \subseteq T$ (with $s_i \in S' \iff u_i = 1$ and $t_j \in V'_1 \iff v_j = 1$), such that the subsets of vertices $S'$ and $T'$ maximize the following quantity 
\[ 
 \underbrace{E(S',T')}_{\text{\# edges in $S' \times T'$}}  
- \underbrace{ ( |S'|  |T'|  - E(S',T') )}_{\text{\# non-edges in $S' \times T'$}}
- \underbrace{ ( |E| - E(S',T') )  }_{\text{\# edges outside $S' \times T'$}}  
  = 3 E(S',T') - |S'|  |T'| - |E| . 
\] 
This problem is a particular variant of the general problem of finding large dense subgraphs in bipartite graphs; see, e.g., \cite{AHI02, K06, KS09} and the references therein. 
If the size of the subgraph is fixed a priori, finding the densest subgraph is NP-hard~\cite{AHI02}, even to approximate~\cite{K06}. 
However, finding a partition $S' \times T'$ that maximizes $\frac{E(S',T')}{\sqrt{|S'| |T'|}}$ can be done in polynomial time~\cite{KV99}.  
The problem above is slightly different because we do not fix the size nor try to find the densest subgraph: it looks for a subgraph that is at the same time large and relatively dense. 
Hence, as far as we know, the complexity results for the densest subgraph do not apply to our problem (at least we could not find a  reduction from these problems to ours).

In the following we prove that rank-one $\ell_0$-LRA is equivalent to rank-one BMF. Let us show the following straightforward lemma. 
\begin{lemma} \label{lem1}
Let $x \in \mathbb{R}^{m}$ and $y \in \mathbb{R}^{n}$, and let $M$ be a binary matrix.   
Applying the following simple transformation to $x$ and $y$ 
\[
\Phi(x)_i = 
\left\{ 
\begin{array}{cc} 
0 & \text{ if $x_i$ = 0,  } \\ 
1 & \text{ otherwise, } \\
\end{array} \right.
\]
gives 
\[
||M-\Phi(x) \Phi(y)^T||_0 \leq ||M-xy^T||_0 . 
\] 
\end{lemma}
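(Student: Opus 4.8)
The plan is to exploit the fact that the $\ell_0$ norm decomposes entrywise, so that it suffices to compare the two approximations one position at a time. Writing the mismatch count as a sum of indicator terms,
\[
||M-xy^T||_0 = \sum_{i,j} \mathbf{1}[\, x_i y_j \neq M_{ij} \,],
\]
and likewise for $\Phi(x)\Phi(y)^T$, the whole statement reduces to the pointwise inequality $\mathbf{1}[\Phi(x)_i\Phi(y)_j \neq M_{ij}] \leq \mathbf{1}[x_i y_j \neq M_{ij}]$ for every pair $(i,j)$; summing over $i,j$ then yields the lemma.

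The key observation I would record first is that $\Phi(x)\Phi(y)^T$ is precisely the \emph{support pattern} of $xy^T$. Since $\Phi(z)=0$ exactly when $z=0$, we have $\Phi(x)_i\Phi(y)_j = 1$ if and only if $x_i \neq 0$ and $y_j \neq 0$, that is, if and only if $x_i y_j \neq 0$, and $\Phi(x)_i\Phi(y)_j=0$ otherwise. Thus the binarized product agrees with the binary entry $M_{ij}$ exactly according to whether the original product is nonzero.

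With this in hand I would split into the two possible values of $M_{ij}$. When $M_{ij}=0$, both approximations mismatch $M_{ij}$ under exactly the same condition, namely $x_i y_j \neq 0$, so the two indicator terms are equal. When $M_{ij}=1$, the binarized entry mismatches only when $x_i y_j = 0$, whereas the real entry mismatches whenever $x_i y_j \neq 1$; since $x_i y_j = 0$ certainly implies $x_i y_j \neq 1$, a mismatch on the left forces one on the right. Hence the pointwise inequality holds in both cases, and summation completes the argument.

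I do not expect a genuine obstacle here: the reasoning is entirely local, and the separability of the $\ell_0$ norm does the decomposition for free. The only point requiring a moment's care is the $M_{ij}=1$ case, where one must check that $\Phi$ can only \emph{repair} a mismatch, turning an incorrect nonzero value into the correct $1$, and never create a new one; on the $M_{ij}=0$ entries the relation is in fact an equality rather than a strict improvement, which is why the inequality is in general not strict.
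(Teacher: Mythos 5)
Your proof is correct and follows essentially the same route as the paper's: an entrywise comparison of mismatch indicators, with a case analysis on whether $\Phi$ can create a new mismatch (it cannot, and can only repair one when $M_{ij}=1$ and $x_iy_j\notin\{0,1\}$). The only cosmetic difference is that you organize the cases by the value of $M_{ij}$ while the paper splits first on whether $x_iy_j=0$; the content is identical.
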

\begin{proof}
There are two cases 
\begin{enumerate} 
\item If $x_i y_j = 0$, then $\Phi(x_i) \Phi(y_j) = 0$ hence the transformation does not affect the approximation. 

\item If $x_i y_j \neq 0$, then $\Phi(x_i) \Phi(y_j) = 1$. If $M_{ij} = 0$ then $||M_{ij}-x_iy_j||_0 = ||M-\Phi(x_i) \Phi(y_j)||_0 = 1$ while, if $M_{ij} = 1$, $||M_{ij}-x_iy_j||_0 \geq ||M-\Phi(x_i) \Phi(y_j)||_0 = 0$. 
\end{enumerate} 
\end{proof} 

\begin{corollary} \label{cor1} 
For a binary input matrix $M$, rank-one $\ell_0$-LRA is equivalent to rank-one BMF. 
\end{corollary}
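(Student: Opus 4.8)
The plan is to prove equality of the optimal values of the two problems by a two-sided inequality, exploiting the fact that the feasible set of rank-one BMF is contained in that of rank-one $\ell_0$-LRA, together with the rounding estimate just established in Lemma~\ref{lem1}.

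First I would handle the easy inclusion. Every binary pair $(u,v) \in \{0,1\}^m \times \{0,1\}^n$ is also a feasible point of $\ell_0$-LRA, simply because $\{0,1\} \subset \mathbb{R}$, and it yields the same objective value $||M-uv^T||_0$ in both problems. Consequently the optimal value of $\ell_0$-LRA is no larger than that of BMF. For the reverse direction I would take an arbitrary (in particular an optimal) solution $(x,y) \in \mathbb{R}^m \times \mathbb{R}^n$ of $\ell_0$-LRA and apply the map $\Phi$ componentwise to obtain the pair $(\Phi(x),\Phi(y))$. By definition of $\Phi$ we have $\Phi(x) \in \{0,1\}^m$ and $\Phi(y) \in \{0,1\}^n$, so this pair is feasible for BMF, and Lemma~\ref{lem1} guarantees $||M-\Phi(x)\Phi(y)^T||_0 \leq ||M-xy^T||_0$. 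Hence the optimal value of BMF is no larger than that of $\ell_0$-LRA, and combining the two inequalities gives equality of the optimal values.

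Finally, I would note that the equivalence is in fact constructive, since $\Phi$ is computable in linear time. If $(x,y)$ is optimal for $\ell_0$-LRA, then $(\Phi(x),\Phi(y))$ has objective value at most the common optimal value while being feasible for BMF, so it must attain that value and is therefore optimal for BMF; conversely any BMF optimum is already an $\ell_0$-LRA solution of the same value, hence optimal there as well. I do not expect a genuine obstacle: the whole argument is carried by Lemma~\ref{lem1}, and the only points to verify explicitly are that $\Phi$ returns binary vectors and that $\Phi(x)\Phi(y)^T$ is a legitimate rank-one binary matrix, both of which are immediate from the definition of the outer product.
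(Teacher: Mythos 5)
Your argument is correct and is exactly the one the paper intends: the corollary is stated without proof immediately after Lemma~\ref{lem1} because it follows from the feasible-set inclusion in one direction and the rounding map $\Phi$ of that lemma in the other, which is precisely your two-sided inequality. Nothing further is needed.
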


In the next section, we prove that rank-one $\ell_0$-LRA of a binary matrix is NP-hard, 
showing it is equivalent to the computation of the cut norm of a $\{-1,+1\}$ matrix which we show is NP-hard 
using a reduction from MAX CUT.

\section{Cut norm of $\{-1,+1\}$ matrices and rank-one $\ell_0$-LRA} \label{sec3}

Given a matrix $M$, its cut norm is defined as~\cite{FK99} 
\begin{equation} \label{cutnorm}
||M||_C 
\; = \; \max_{u \in \{0,1\}^m, v \in \{0,1\}^n} 
\left|  u^T M v \right| . 
\end{equation} 
The fact that this is a norm on $\mathbb{R}^{m\times n}$ (regarded as a vector space isomorphic to $\mathbb{R}^{mn}$) can be checked easily. 
In~\cite{FK99}, Frieze and Kannan study the low-rank matrix approximation problem with respect to the cut norm and design an algorithm that provides a solution which is the sum of $O(1/\epsilon^2)$ rank-one matrices with error at most $\epsilon \, mn$. 

In~\cite{AN06}, Alon and Naor prove NP-hardness of the problem of computing the cut norm using a reduction from MAX~CUT. The reduction uses matrices $M$ with entries in $\{-1,0,+1\}$.  

Let us show how the cut norm computation is related to low-rank matrix approximations. Let $M$ be a binary matrix and let us derive some equivalent forms of rank-one BMF or, equivalently, to rank-one $\ell_0$-LRA (Corollary~\ref{cor1}). First note that for any $(u,v)$, we have 
\begin{equation} \label{simpleLA}
||M-uv^T||_F^2 = ||M||_F^2  - 2 \sum_{i,j} M_{ij} u_i v_j + \sum_{i,j} (u_i v_j)^2. 
\end{equation} 
Hence, we obtain 
\begin{align*}
\min_{u \in \{0,1\}^m, v \in \{0,1\}^n} ||M-uv^T||_F^2
& =  ||M||_F^2 + \min_{u \in \{0,1\}^m, v \in \{0,1\}^n}  \sum_{i,j} u_i v_j - 2 \sum_{i,j} M_{ij} u_i v_j  \\ 
& =  ||M||_F^2 + \min_{u \in \{0,1\}^m, v \in \{0,1\}^n}  \sum_{i,j} (1-2M_{ij}) u_i v_j  \\ 
& =  ||M||_F^2 + \max_{u \in \{0,1\}^m, v \in \{0,1\}^n}  \sum_{i,j} (2M_{ij}-1) u_i v_j  . 
\end{align*}
The last problem is closely related to the cut norm of the $\{-1,+1\}$ matrix $A = 2M - 1$ and $-A$. In fact,
\[
||A||_C = \max \left( 
\max_{u \in \{0,1\}^m, v \in \{0,1\}^n}  \sum_{i,j} (2M_{ij}-1) u_i v_j, 
\max_{u \in \{0,1\}^m, v \in \{0,1\}^n}  \sum_{i,j} (1-2M_{ij}) u_i v_j \right) . 
\] 
Hence, if we were able to solve $\max_{u \in \{0,1\}^m, v \in \{0,1\}^n}  u^T X v$ for any $X \in \{-1,+1\}^{m \times n}$, we would be able to compute the cut norm of $A$. 
However, the cut norm problem was shown to be NP-hard only for $A \in \{-1,0,+1\}^{m \times n}$ using a reduction from MAX CUT. It turns out that the reduction no longer holds for  $A \in \{-1,+1\}^{m \times n}$. 

In the following, we prove that computing the cut norm of $\{-1,+1\}^{m \times n}$ matrices is NP-hard hence rank-one BMF and rank-one $\ell_0$-LRA are also NP-hard. 

First, let us consider the following norm introduced in~\cite{AN06}: for $A \in \mathbb{R}^{m \times n}$  
\begin{equation} \label{inf1norm}
|| A ||_{\infty \rightarrow 1} \; = \; 
\max_{u \in \{-1,+1\}^m,v \in \{-1,+1\}^n} u^T A v. 
\end {equation} 
To the best of our knowledge, solving~\eqref{inf1norm} was first shown to be NP-hard in~\cite{PR93} 
(for matrices in $\{0,1,p\}$ for some positive real $p$); see also~\cite{R2000}. 

This norm is closely related to the cut norm, in fact, it can be easily shown that~\cite{AN06} 
\[
|| A ||_{C}
\; \leq  \;  
|| A ||_{\infty \rightarrow 1}  
\; \leq  \; 
4 || A ||_{C} . 
\] 
Moreover, 
\begin{lemma} \label{lem2}
Given $A \in \mathbb{R}^{m \times n}$, we have 
\[
|| A ||_{\infty \rightarrow 1} 
\; 
= 
\; 
\left\| 
\left( 
\begin{array}{cc}
A & -A \\
-A & A 
\end{array} \right) 
\right\|_C . 
\]
\end{lemma}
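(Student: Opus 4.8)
The plan is to evaluate $\|B\|_C$ directly, where $B = \left(\begin{smallmatrix} A & -A \\ -A & A \end{smallmatrix}\right)$, by exploiting its block structure, and then to connect the resulting $\{-1,0,+1\}$-optimization to the $\{-1,+1\}$-optimization defining $\|A\|_{\infty \rightarrow 1}$ through a rounding argument. First I would partition the $0/1$ maximizers in the definition of the cut norm into blocks: writing the optimization vectors as $p = (p_1,p_2)$ with $p_1,p_2 \in \{0,1\}^m$ and $q = (q_1,q_2)$ with $q_1,q_2 \in \{0,1\}^n$, a direct block expansion gives
\[
p^T B q = (p_1 - p_2)^T A (q_1 - q_2).
\]
Setting $x = p_1 - p_2$ and $y = q_1 - q_2$, every coordinate of $x$ and $y$ lies in $\{-1,0,+1\}$; conversely any $x \in \{-1,0,+1\}^m$ is realized this way by taking $p_1 = \max(x,0)$ and $p_2 = \max(-x,0)$ componentwise. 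Hence
\[
\|B\|_C = \max_{x \in \{-1,0,+1\}^m, \, y \in \{-1,0,+1\}^n} |x^T A y|.
\]

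Next I would dispatch the easy inequality. Since $\{-1,+1\} \subset \{-1,0,+1\}$, and since the domain defining $\|A\|_{\infty \rightarrow 1}$ is symmetric under $u \mapsto -u$ (so that $\max u^T A v = \max |u^T A v|$), the representation above immediately yields $\|A\|_{\infty \rightarrow 1} \leq \|B\|_C$.

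The reverse inequality is the crux, and I expect the rounding step to be the main obstacle. The idea is that $x^T A y = \sum_{i,j} A_{ij} x_i y_j$ is affine in each coordinate separately: fixing all variables except a single coordinate $x_i$, the quantity has the form $c\, x_i + d$ with $c = \sum_j A_{ij} y_j$. The map $t \mapsto |c\,t + d|$ is convex, so its maximum over $\{-1,0,+1\}$ is attained at an endpoint $t \in \{-1,+1\}$; concretely, the triangle inequality gives $|d| \leq \tfrac12\big(|d-c| + |d+c|\big) \leq \max\big(|d-c|,|d+c|\big)$. Consequently any coordinate equal to $0$ can be replaced by the better of $\pm 1$ without decreasing $|x^T A y|$. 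Starting from an optimal ternary pair and processing each of its zero coordinates in turn produces $(u,v) \in \{-1,+1\}^m \times \{-1,+1\}^n$ with $|u^T A v| \geq \|B\|_C$, whence $\|A\|_{\infty \rightarrow 1} \geq \|B\|_C$. Combining the two inequalities gives the claimed equality.
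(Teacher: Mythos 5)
Your proof is correct, but it follows a genuinely different route from the paper's. The paper first establishes (following Alon and Naor) that any matrix $B$ whose rows and columns sum to zero satisfies $\|B\|_{\infty\rightarrow 1}=4\|B\|_C$, via the substitution $u=2x-e$, $v=2y-e$; it then separately shows $\|B\|_{\infty\rightarrow 1}=4\|A\|_{\infty\rightarrow 1}$ for the doubled matrix by expanding $u^TBv$ into four block terms, each bounded by $\|A\|_{\infty\rightarrow 1}$ and simultaneously achievable with $u_2=-u_1$, $v_2=-v_1$. You instead work entirely on the cut-norm side: the identity $p^TBq=(p_1-p_2)^TA(q_1-q_2)$ turns $\|B\|_C$ into a maximization of $|x^TAy|$ over ternary vectors $x\in\{-1,0,+1\}^m$, $y\in\{-1,0,+1\}^n$, and the coordinatewise convexity of $t\mapsto|ct+d|$ lets you round every zero to $\pm 1$ without loss, collapsing the ternary problem onto $\|A\|_{\infty\rightarrow 1}$. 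Both arguments are complete; yours is self-contained and makes transparent \emph{why} the doubling construction works (it exactly parametrizes ternary sign vectors by differences of $0/1$ vectors), while the paper's version connects the lemma to the known relation $\|B\|_{\infty\rightarrow 1}=4\|B\|_C$ for zero-row/column-sum matrices and so situates it within the Alon--Naor framework. Your rounding step is the one point that needs care, and you handle it correctly by invoking the symmetry $u\mapsto -u$ so that the maximum of $u^TAv$ over $\{-1,+1\}$ vectors equals the maximum of $|u^TAv|$.
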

\begin{proof}
It was proved in~\cite{AN06} that a matrix $B$ whose rows and columns sum to zero satisfies
\[
|| B ||_{\infty \rightarrow 1} 
\; 
= 
\; 
4 || B ||_C. 
\] 
In fact, let $e$ be the vector of all ones, and $u = 2x - e$ and $v = 2y-e$ have entries in $\{-1,+1\}$ where $x$ and $y$ have binary entries. Then, 
\[
u^T B v 
= (2x - e)^T B (2y-e) 
= 4 x^T B y - 2x^T Be - 2 e^T B y + e^T B e
= 4 x^T B y
\]
since $Be = 0$ and $B^T e = 0$ by assumption. 

Since the rows and columns of the matrix  
\[ 
B = \left( 
\begin{array}{cc}
A & -A \\
-A & A 
\end{array} \right)
\]
sum to zero, we have  $|| B ||_{\infty \rightarrow 1} 
= 4 || B ||_C$. To conclude the proof, we show that 
$|| B ||_{\infty \rightarrow 1} = 4 || A ||_{\infty \rightarrow 1}$. 
Let $u = [u_1; u_2]$ and $v = [v_1;v_2]$ be a solution of~\eqref{inf1norm} for $B$. We have 
\[
|| B ||_{\infty \rightarrow 1}
= u^T B v 
= u_1^T A v_1 - u_1^T A v_2  + u_2^T  A v_2  - u_2^T A v_1 . 
\]
Clearly, each term has to be smaller than $|| A ||_{\infty \rightarrow 1}$ since $u$ and $v$ have entries in $\{-1,+1\}$ and $|| A ||_{\infty \rightarrow 1} = || -A ||_{\infty \rightarrow 1}$. However, taking $u_1$ and $v_1$ such that 
\[
|| A ||_{\infty \rightarrow 1} = u_1^T A v_1, 
\]
 and $u_2 = -u_1$,  $v_2 = -v_1$ gives 
$|| B ||_{\infty \rightarrow 1} = 4 || A ||_{\infty \rightarrow 1}$. 
\end{proof}

Let us show that computing the norm $||.||_{\infty \rightarrow 1}$ is NP-hard for $\{-1,+1\}$ matrices, 
which will imply, by Lemma~\ref{lem2}, that it is NP-hard to compute the cut norm of $\{-1,+1\}$ matrices. 

\begin{theorem} \label{th1}
It is NP-hard to compute the norm $||.||_{\infty \rightarrow 1}$ for $\{-1,+1\}$-matrices. 
\end{theorem}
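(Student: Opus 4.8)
The plan is to reduce MAX CUT directly to the evaluation of $||\cdot||_{\infty\rightarrow 1}$ on a $\{-1,+1\}$ matrix. Optimizing over $v$ for fixed $u$, the quantity in~\eqref{inf1norm} is a \emph{bilinear} form $u^T A v$ whose maximum is attained at the vertices of the cube. The conceptual core is the observation that for a symmetric positive semidefinite matrix $B$ the bilinear maximum collapses onto a quadratic one: factoring $B = R^T R$ and using Cauchy--Schwarz,
\[
\max_{u,v\in\{-1,+1\}^n} u^T B v \;=\; \max_{x\in\{-1,+1\}^n} x^T B x,
\]
where $\leq$ follows from $(Ru)^T(Rv)\le \|Ru\|_2\,\|Rv\|_2 \le \max_w \|Rw\|_2^2$ and $\geq$ from taking $u=v$ equal to a maximizer of $x^T B x$. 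Applying this to the Laplacian $L = D - W$ of a MAX CUT instance $G=(V,E)$ (with $W$ the adjacency matrix and $D$ the degree matrix), which is PSD, and using $x^T L x = 4\,\mathrm{cut}(x)$ for $x\in\{-1,+1\}^{|V|}$, yields $||L||_{\infty\rightarrow 1} = 4\,\mathrm{MaxCut}(G)$. Hence MAX CUT is \emph{already} encoded by the $\infty\rightarrow 1$ norm --- but of the integer-valued matrix $L$, not of a $\{-1,+1\}$ matrix.

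Second, I would turn this integer instance into a genuine $\{-1,+1\}$ instance. The tool is \emph{weight amplification by duplication}: a block of several identical rows forces the corresponding entries of $u$ to share a common sign at any optimum (they all see the same inner product with $v$), so a group of duplicated coordinates behaves as a single $\pm1$ variable, while an all-$+1$ or all-$-1$ block of size $a\times b$ contributes an effective weight $\pm ab$. Magnitudes larger than one (such as the degrees on the diagonal of $L$) are realized by the sizes of these blocks, and the vanishing entries of the standard reduction are simulated by \emph{paired} columns $j^{+},j^{-}$ carrying opposite signs, whose joint contribution cancels once $v_{j^{+}}=v_{j^{-}}$ is enforced by a heavily duplicated anchor block. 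Together with Lemma~\ref{lem2}, NP-hardness of $||\cdot||_{\infty\rightarrow 1}$ on $\{-1,+1\}$ matrices then transfers to the cut norm, and hence (via Corollary~\ref{cor1}) to rank-one BMF and $\ell_0$-LRA.

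The step I expect to be the crux --- and precisely the point where the $\{-1,0,+1\}$ reduction of Alon and Naor breaks, as noted above --- is eliminating the zero entries and the varying magnitudes while keeping the correspondence exact. Forcing \emph{both} the rows of a group and the columns of a group to be identical would make every block constant and thus permit only effective weights $\pm ab$; the gadgets must instead certify that \emph{every} optimizer of the $\{-1,+1\}$ problem (not merely the group-constant ones that furnish a lower bound) localizes to an aligned, group-constant vector, since only then does the computed norm equal a known affine function of $||L||_{\infty\rightarrow 1}$, and therefore of $\mathrm{MaxCut}(G)$. Making the anchor weights large enough to guarantee this alignment, yet controlled enough that the additive constants they introduce can be subtracted off in polynomial time, is the delicate part of the argument; everything else is bookkeeping.
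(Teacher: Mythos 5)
Your opening step is correct and is a clean alternative entry point: for PSD $B$ the identity $\max_{u,v\in\{-1,+1\}^n}u^TBv=\max_{x\in\{-1,+1\}^n}x^TBx$ holds by the Cauchy--Schwarz argument you give, and $x^TLx=4\,\mathrm{cut}(x)$ yields $||L||_{\infty\rightarrow 1}=4\,\mathrm{MaxCut}(G)$. (The paper reaches the same place slightly differently, working with the signed edge--vertex incidence structure rather than the Laplacian, but both establish hardness for integer matrices with zeros.) The problem is that the theorem is precisely about the restriction to $\{-1,+1\}$ entries, and on that step your proposal is a plan whose crux you explicitly leave open --- and the gadget you sketch cannot be completed in the form you describe. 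To enforce $v_{j^+}=v_{j^-}$ you add anchor rows with $+1$ in both paired columns; each such row rewards agreement by only $\pm 2$. But every remaining entry of that anchor row must also be $\pm 1$, so the row additionally contributes up to $n-2$ in absolute value through the non-anchored columns, and the maximizer will happily sacrifice the $\pm 2$ agreement bonus to exploit those terms. Piling on more anchor rows does not help: each new row brings the same uncontrolled $O(n)$ interaction. The same obstruction defeats the duplication scheme for realizing the degree entries: with row groups of size $a_i$ and column groups of size $b_j$, constant blocks can only realize effective weights of the form $\pm a_ib_j$, which cannot match the Laplacian's mixture of $d_i$'s, $-1$'s and $0$'s, as you yourself note.

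The single missing idea --- and it is the heart of the paper's proof --- is a $\{-1,+1\}$ block that behaves like a zero: a $p\times p$ Hadamard matrix $H$ satisfies $|u_0^THv_0|\le ||u_0||_2\,||H||_2\,||v_0||_2=p^{3/2}$ for all $u_0,v_0\in\{-1,+1\}^p$, which is asymptotically negligible against the $2p^2$ contributed by an aligned all-ones block. The paper's construction places, for each edge $q=(i,j)$, an all-$(+1)$ block at position $(q,i)$, an all-$(-1)$ block at $(q,j)$, and Hadamard blocks everywhere else; choosing $p>|E|^2|V|^2$ makes the total Hadamard ``noise'' $|E||V|p^{3/2}$ strictly smaller than the $2p^2$ gap between consecutive cut values, so the rounding argument (convexity of $t\mapsto\sum_q|t_i-t_j|$ over $[0,p]^{|V|}$, maximized at a vertex) closes the reduction exactly. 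Without this ingredient, or some equivalent filler with provably small $||\cdot||_{\infty\rightarrow 1}$, the ``delicate part'' you defer is not bookkeeping but the entire theorem; as written the proposal has a genuine gap.
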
 
\begin{proof}
The problem under consideration is, given an $m \times n$ matrix $A$ all of whose
entries are $\pm 1$, find $\u\in\{-1,1\}^m$ and $\v\in\{-1,1\}^n$ to maximize
$\u^TA\v$.  Let us define P1 the decision version of this problem: 
\begin{quote} 
(P1) Given $A$ and an integer $d^*$, does there exist  
$\u\in\{-1,1\}^m$ and $\v\in\{-1,1\}^n$ 
such that $\u^TA\v\ge d^*$? 
\end{quote} 
We prove that this problem is NP-hard by a reduction from 
unweighted MAX CUT. 
\begin{quote} 
(MAX CUT) Given a graph $G=(V,E)$ and a positive integer $c^*$, is there a cut containing at least $c^*$ edges? 
\end{quote}  
Given an instance $(G,c^*)$ of MAX CUT,  let us produce an instance $(A,d^*)$ of P1 as follows.
Let $p\ge 1$ be an integer to be determined later.  The matrix $A$ has dimension $m \times n$
with $m=p|E|$ and $n=p|V|$. 
It will be constructed via  
$p\times p$ blocks as follows.  Suppose edge $q \in E$, $1\le q \le |E|$, has endpoints
$(i,j)$, $1\le i<j\le |V|$.  Then the $(q,i)$ block of $A$ 
is the $p\times p$ block of all 1's; the $(q,j)$ block of $A$ is the $p\times p$
block of all $-1$'s, and the $(q,l)$ block of $A$ for any $l\in\{1,\ldots,|V|\}-\{i,j\}$
is the $p\times p$ Hadamard matrix $H$.

Recall that a $p\times p$ {\em Hadamard matrix} $H$ is a matrix all of
whose entries are $\pm 1$ and such that the columns are mutually
orthogonal.  In the case that $p$ is a power of 2, there is a
straightforward recursive construction of a Hadamard matrix: the
$1\times 1$ Hadamard matrix is $[\,1\,]$, and the $p\times p$ Hadamard
matrix is $[H_0,H_0;-H_0,H_0]$, where $H_0$ is the $(p/2)\times (p/2)$
Hadamard matrix.  For the remainder of this proof, assume that $p$ is
a power of 2 so that we can rely on this simple construction.

Suppose $(S,\bar S)$ is a partition of $V$ (i.e., a cut).  We can associate
vectors $(\u,\v)$ with $S$ as follows: for $i\in S$, let the $i$th block of
$\v$ contain $p$ 1's.  For $i\in \bar S$, let the $i$th block of $\v$
contain $p$ $-1$'s.  For $q=(i,j)\in E$, $i<j$, 
such that $i\in S$, $j\in \bar S$, let the $q$th block of $\u$ contain $p$ 1's.
For $(i,j)\in E$, $i<j$, such that $i\in \bar S$, $j\in S$, let the $q$th
block of $\u$ contain $p$ $-1$'s.  Finally, for $q=(i,j)$ such that
$\{i,j\}\subset S$ or $\{i,j\}\subset \bar S$, the $q$th block of $\u$ may be
selected arbitrarily. 

For the choice of $(\u,\v)$ in the last paragraph, let us obtain a lower bound
on $\u^TA\v$.  Observe that the signs have been chosen such that for each $q=(i,j)$
that crosses the cut, the blocks of $A$ indexed $(q,i)$ and $(q,j)$ contribute
$2p^2$ to $\u^TA\v$.  Blocks of $A$ indexed $(q,i)$ and $(q,j)$ such that
$q=(i,j)$ does not cross the cut contribute 0.  Finally, we have to account
for the blocks of the form $(q,l)$ where $l$ is not an endpoint of $q$.  For this,
we make the following observation:  if $\u_0,\v_0$ are two $\pm 1$ $p$-dimensional vectors 
and $H$ is a $p\times p$ Hadamard matrix, then $|\u_0^TH\v_0|\le p^{3/2}$ \cite{BS71}.  This
follows because $\Vert\u_0\Vert_2=\Vert\v_0\Vert_2=\Vert H\Vert_2 =\sqrt{p}$.  (The
last equation follows because the $\ell_2$-norm of an orthogonal matrix is exactly 1, 
and a Hadamard matrix is an orthogonal
matrix scaled by $\sqrt{p}$.)

Thus, a lower bound on the objective $\u^TA\v$ for the cut $(S,\bar S)$ is
$2p^2c-|E||V|p^{3/2}$, where $c$ is the size of the cut induced by $(S,\bar S)$ because
there at most $|E||V|$ blocks of the form $(q,l)$ where $l$ is not an endpoint of $q$.
 So the decision problem posed for P1 is:
``Given $A$ constructed above,
is the objective function for this $A$ at least $d^*=2p^2c^*-|E||V| p^{3/2}$?''
We have already shown that if there is a cut of size $c^*$ in the graph, then
there is a solution of size $d^*$ for P1.

The last thing to prove is that if the max cut of $G$ has fewer than $c^*$
edges, then P1 is a no-instance, i.e., for any
$\u\in\{-1,1\}^{|E|p}$ and $\v\in\{-1,1\}^{|V|p}$,
$\u^TA\v < d^*$.  Choose any
$\u\in\{-1,1\}^{|E|p}$ and $\v\in\{-1,1\}^{|V|p}$. Let $s_q$ denote the number
of $1$'s in block $q$ of $\u$, $q=1,\ldots |E|$ (so that $p-s_q$ is the number of
$-1$'s in the block).  Let $t_i$ denote the number of $1$'s in block $i$ of
$\v$.  It is straightforward to show that contribution to the objective function
from the $2|E|$ blocks of $A$ that correspond to edges (i.e., the blocks
numbered $(q,i)$ where $i$ is an endpoint of $q$)
is precisely $T_1$, where
$$
T_1=\sum_{q=(i,j)\in E}2(t_i-t_j)(2s_q-p).
$$
Now observe that 
$$
T_1 \le  \sum_{q=(i,j)\in E} 2|t_i-t_j|p
$$
since the second factor in the previous summation has absolute value at
most $p$.  Next, notice that this latter summation, regarded as a function of
$\t\in [0,p]^{|V|}$, is maximized at an extreme point because it is a convex
function.  Therefore, there exists a vector $\tilde \t\in\{0,p\}^{|V|}$ such that
$$
T_1 \le 2p \sum_{q=(i,j)\in E}| \tilde t_i-\tilde t_j|.
$$
The latter is exactly $2p^2$ multiplied by the size of cut induced by $\tilde t$
(i.e., the cut $(S,\bar S)$ with $i\in S$ if and only if $\tilde t_i=p$).  Since we
are considering the case that all cuts have fewer than $c^*$ edges, 
$$
T_1 \le 2p^2(c^*-1).
$$ 
This accounts for the $2|E|$ blocks that correspond to edges.
For the $|E||V|-2|E|$ blocks that do not correspond to edges, the contribution
to the objective function is at most $p^{3/2}$ for the same reason as above.
Therefore,
$$
\u^TA\v \le 2p^2(c^*-1)+|E||V|p^{3/2} = 2p^2c^* - (2p^2 - |E||V|p^{3/2}).
$$
We see that the right-hand side is less than $d^*=2p^2c^*-|E||V|p^{3/2}$ provided that $2|E||V|p^{3/2}<2p^2$,
i.e., $\sqrt{p}>|E||V|$.  Therefore, we choose $p>|E|^2|V|^2$ and also $p$ a power of 2. 
\end{proof}

\begin{corollary}
It is NP-hard to compute the cut norm~\eqref{cutnorm} of $\{-1,+1\}$-matrices. 
\end{corollary}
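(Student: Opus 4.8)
The plan is to derive this corollary as an immediate consequence of Theorem~\ref{th1} and Lemma~\ref{lem2}, via a trivial polynomial-time reduction from the computation of $||.||_{\infty \rightarrow 1}$ on $\{-1,+1\}$ matrices (shown NP-hard in Theorem~\ref{th1}) to the computation of the cut norm~\eqref{cutnorm} on $\{-1,+1\}$ matrices. First I would take an arbitrary instance $A \in \{-1,+1\}^{m \times n}$ of the $||.||_{\infty \rightarrow 1}$ problem and form the $2m \times 2n$ block matrix
\[
B = \left( \begin{array}{cc} A & -A \\ -A & A \end{array} \right).
\]

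The only point that genuinely needs checking is that $B$ is again a $\{-1,+1\}$ matrix: this holds because negating a $\{-1,+1\}$ matrix leaves its entries in $\{-1,+1\}$, so every block of $B$, and hence $B$ itself, has entries in $\{-1,+1\}$. The construction of $B$ from $A$ is clearly polynomial in the size of $A$, since it merely doubles each dimension and copies (possibly negated) entries. By Lemma~\ref{lem2} we have the exact identity $||A||_{\infty \rightarrow 1} = ||B||_C$, so any procedure computing the cut norm of the $\{-1,+1\}$ matrix $B$ simultaneously yields the value $||A||_{\infty \rightarrow 1}$. Since the latter computation is NP-hard by Theorem~\ref{th1}, computing the cut norm of $\{-1,+1\}$ matrices must be NP-hard as well.

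There is no real obstacle at this stage: all the substantive work has already been carried out, the MAX CUT reduction in Theorem~\ref{th1} and the norm identity in Lemma~\ref{lem2}. The reduction here is essentially a one-line wrapping, and the sole subtlety, easily dispatched, is confirming that passing to the block matrix $B$ preserves the $\{-1,+1\}$ entry constraint (so that we remain within the restricted class of inputs, rather than landing in $\{-1,0,+1\}^{m\times n}$ as in the earlier reduction of Alon and Naor). Finally, I would remark that, combined with Corollary~\ref{cor1} and the equivalences established just before Theorem~\ref{th1}, this NP-hardness of the cut norm on $\{-1,+1\}$ matrices propagates back to give NP-hardness of rank-one BMF and of rank-one $\ell_0$-LRA for binary input matrices.
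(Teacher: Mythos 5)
Your proposal is correct and matches the paper's intended argument exactly: the paper derives this corollary from Theorem~\ref{th1} together with Lemma~\ref{lem2}, using the same block matrix $B = \bigl(\begin{smallmatrix} A & -A \\ -A & A \end{smallmatrix}\bigr)$, which remains a $\{-1,+1\}$ matrix. Your added check that the construction stays within the $\{-1,+1\}$ input class is the right point to emphasize, and nothing further is needed.
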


\begin{corollary}
Rank-one $\ell_0$-LRA, that is, problem~\eqref{l0LRA} with $r=1$, is NP-hard. 
\end{corollary}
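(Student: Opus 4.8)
The plan is to give a polynomial-time Turing reduction from the computation of the cut norm of a $\{-1,+1\}$ matrix, which the preceding corollary shows to be NP-hard. The entry point is Corollary~\ref{cor1}: for a binary $M$ the real rank-one $\ell_0$-LRA problem~\eqref{l0LRA} has the same optimal value as rank-one BMF, and since both $M$ and any binary $uv^T$ have $0/1$ entries, on the residual $\|\cdot\|_0$ coincides with $\|\cdot\|_F^2$. Writing $\mathrm{OPT}(M)$ for this common optimal value, we therefore have $\mathrm{OPT}(M)=\min_{u\in\{0,1\}^m,\,v\in\{0,1\}^n}\|M-uv^T\|_F^2$.

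First I would turn $\mathrm{OPT}(M)$ into a cut-type maximum. A direct expansion of $\|M-uv^T\|_F^2$ over binary $u,v$, using $(u_iv_j)^2=u_iv_j$ and setting $A=2M-1\in\{-1,+1\}^{m\times n}$, gives
\[
\mathrm{OPT}(M)=\|M\|_F^2-\max_{u\in\{0,1\}^m,\,v\in\{0,1\}^n}u^TAv,
\]
so that $\max_{u,v\in\{0,1\}}u^TAv=\|M\|_F^2-\mathrm{OPT}(M)$ is recovered from a single evaluation of rank-one $\ell_0$-LRA together with the trivially computable number $\|M\|_F^2$.

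The key step is to account for the absolute value in the cut norm. Since $\|A\|_C=\max_{u,v\in\{0,1\}}|u^TAv|=\max\big(\max_{u,v}u^TAv,\ \max_{u,v}u^T(-A)v\big)$, the identity above produces only the first of the two inner maxima. I would obtain the second by applying the same identity to the entrywise complement $M'=1-M$, which is again binary and satisfies $2M'-1=-A$; this gives $\max_{u,v\in\{0,1\}}u^T(-A)v=\|M'\|_F^2-\mathrm{OPT}(M')$. Combining the two,
\[
\|A\|_C=\max\Big(\|M\|_F^2-\mathrm{OPT}(M),\ \|M'\|_F^2-\mathrm{OPT}(M')\Big).
\]
Thus, given any $A\in\{-1,+1\}^{m\times n}$, two calls to an oracle for rank-one $\ell_0$-LRA, on $M=(A+1)/2$ and on $M'=(1-A)/2$, followed by $O(mn)$ arithmetic, compute $\|A\|_C$; NP-hardness of the cut norm of $\{-1,+1\}$ matrices then forces rank-one $\ell_0$-LRA to be NP-hard.

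The one genuine subtlety, and the step I expect to need the most care, is precisely this absolute value: a naive one-call reduction would capture only $\max_{u,v}u^TAv$, which for a non-symmetric $\{-1,+1\}$ matrix can be strictly smaller than $\|A\|_C$, so the two calls on $M$ and its complement are essential. I would also stress that the reduction is Turing rather than many-one, which is adequate for NP-hardness of the optimization problem, and that it is crucial to invoke Corollary~\ref{cor1} first, since problem~\eqref{l0LRA} is posed over real $u,v$ and only collapses to its binary form for a binary input.
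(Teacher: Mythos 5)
Your proposal is correct and follows essentially the same route the paper takes implicitly: Corollary~\ref{cor1} to pass from real-valued rank-one $\ell_0$-LRA of a binary matrix to rank-one BMF, the Frobenius expansion~\eqref{simpleLA} to identify the BMF value with $\|M\|_F^2-\max_{u,v\in\{0,1\}}u^TAv$ for $A=2M-1$, and two oracle calls (on $M$ and its complement, i.e., on $A$ and $-A$) to recover the absolute value in the cut norm, whose NP-hardness for $\{-1,+1\}$ matrices was just established. Your explicit handling of the two-call Turing reduction and of the sign in $\min=-\max$ is if anything slightly more careful than the paper's own exposition.
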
 

\begin{corollary}
Rank-one BMF~\eqref{r1BMF} is NP-hard. 
\end{corollary}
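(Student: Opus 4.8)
The plan is to obtain this as an immediate consequence of the two preceding corollaries. By Corollary~\ref{cor1}, whenever the input matrix $M$ is binary, rank-one $\ell_0$-LRA~\eqref{l0LRA} and rank-one BMF~\eqref{r1BMF} share the same optimal value: a real optimal pair $(x,y)$ for $\ell_0$-LRA is mapped by $\Phi$ to a binary pair $(\Phi(x),\Phi(y))$ without increasing the objective (Lemma~\ref{lem1}), and a binary pair is already feasible for BMF. Hence the two problems are polynomially interreducible on binary instances. First I would note that the NP-hardness of rank-one $\ell_0$-LRA established in the preceding corollary is witnessed by \emph{binary} matrices, namely $M = (A+1)/2$ built from the $\{-1,+1\}$ matrices $A$ of the cut-norm reduction (where $1$ denotes the all-ones matrix, as in $A=2M-1$). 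Since these hard instances are binary, Corollary~\ref{cor1} transfers the hardness verbatim, so rank-one BMF is NP-hard.

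Alternatively, I would give a self-contained reduction directly from the computation of the cut norm~\eqref{cutnorm} of $\{-1,+1\}$ matrices, which the preceding corollary shows is NP-hard. Given $A \in \{-1,+1\}^{m \times n}$, set $M = (A+1)/2 \in \{0,1\}^{m \times n}$ and its complement $\bar M = 1 - M \in \{0,1\}^{m \times n}$. By the expansion~\eqref{simpleLA} and the subsequent derivation of Section~\ref{sec3}, the optimal BMF value for $M$ determines $\max_{u \in \{0,1\}^m,\, v \in \{0,1\}^n} u^T A v$, while the optimal BMF value for $\bar M$ determines $\max_{u \in \{0,1\}^m,\, v \in \{0,1\}^n} u^T(-A) v$, because $2\bar M - 1 = -(2M-1) = -A$. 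Taking the larger of the two quantities recovers $||A||_C = \max_{u,v \in \{0,1\}} |u^T A v|$, so two calls to a BMF routine compute the cut norm of an arbitrary $\{-1,+1\}$ matrix in polynomial time; NP-hardness of BMF follows.

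The main difficulty here is not any heavy computation but a small bookkeeping point in this second route: one BMF evaluation yields only a single signed maximum, whereas the cut norm is the maximum over binary $u,v$ of both $u^T A v$ and $u^T(-A) v$. This is exactly why the reduction must also solve BMF for the complement $\bar M$, and the step to verify is the identity $2\bar M - 1 = -A$ guaranteeing that the second call supplies $\max u^T(-A)v$. For the shorter first route the only thing to check is that the $\ell_0$-LRA hardness instances are genuinely binary, which holds by construction, so that Corollary~\ref{cor1} applies directly.
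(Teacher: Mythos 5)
Your proposal is correct and follows essentially the same route as the paper, which states this corollary without a separate proof precisely because it is the chain you spell out: the cut-norm/$\|\cdot\|_{\infty\rightarrow 1}$ hardness for $\{-1,+1\}$ matrices, the Section~\ref{sec3} identity reducing $\max_{u,v\in\{0,1\}} u^TAv$ (for $A=2M-1$ and $-A=2\bar M-1$) to rank-one BMF of binary matrices, and the equivalence of Corollary~\ref{cor1}. Your bookkeeping remark that two BMF calls (on $M$ and its complement $\bar M$) are needed to recover the absolute value in the cut norm is exactly the point the paper makes in the displayed formula for $\|A\|_C$.
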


\begin{remark}[The discrete basis problem and boolean matrix factorization] 
The discrete basis problem (DSP)~\cite{MMG08}, also known as Boolean matrix factorization, is similar to BMF and can be formulated a follows: 
given a binary matrix $M \in \{0,1\}^{m \times n}$ and a rank $r$, solve
\begin{equation} \label{dsp}
\min_{U \in \{0,1\}^{m \times r}, V \in \{0,1\}^{r \times n}} || M - U \circ V ||
\end{equation}
where $(U \circ V)_{ij} = \bigoplus_k U_{ik} V_{kj}$ where $0 \bigoplus 0 = 0$, $0 \bigoplus 1 = 1$ and $1 \bigoplus 1 = 1$. 
For $r = 1$, BMF and DSP coincide, therefore our result also implies that rank-one DSP is NP-hard. 


Note that DSP is closely related to the rectangle covering problem which is equivalent to the minimum biclique cover problem in bipartite graph;  
see Fiorini et al.~\cite{FK11}. 
\end{remark}

\section{The component-wise $\ell_1$ low-rank matrix approximation problem} \label{sec4}

In this section, we prove that for a $\{-1,+1\}$ matrix, any optimal solution of rank-one $\ell_1$-LRA~\eqref{l1LRA} 
can be assumed to have entries in $\{-1,+1\}$ (Theorem~\ref{th2}). Moreover, we prove that computing the norm $||.||_{\infty \rightarrow 1}$ of a $\{-1,+1\}$ matrix is equivalent to solving rank-one $\ell_1$-LRA for that matrix (Lemma~\ref{lem3}). 
This will imply that rank-one $\ell_1$-LRA is NP-hard (Theorem~\ref{th3}). 

\begin{lemma} \label{lem3}
For $A \in \{-1,+1\}^{m \times n}$, computing $||A||_{\infty \rightarrow 1}$ is equivalent to solving 
\begin{equation} 
\min_{x \in \{-1,+1\}^{m}, y \in \{-1,+1\}^{n}} ||A - xy^T||_1 . \nonumber 
\end{equation}
\end{lemma}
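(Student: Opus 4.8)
The plan is to establish a single pointwise affine identity relating the $\ell_1$ objective to the bilinear form $x^T A y$, after which the equivalence is immediate. The crucial observation is that for any two scalars $a, b \in \{-1, +1\}$ one has $|a - b| = 1 - ab$: both sides equal $0$ when $a = b$ and equal $2$ when $a \neq b$, so the identity holds in all four cases. This is the only nontrivial ingredient, and everything else is bookkeeping.

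First I would note that since $x \in \{-1,+1\}^m$ and $y \in \{-1,+1\}^n$, every product $x_i y_j$ lies in $\{-1,+1\}$, and by hypothesis every entry $A_{ij}$ lies in $\{-1,+1\}$ as well. Hence the pointwise identity applies with $a = A_{ij}$ and $b = x_i y_j$, giving $|A_{ij} - x_i y_j| = 1 - A_{ij} x_i y_j$ for each pair $(i,j)$. Summing over all entries yields
\[
||A - xy^T||_1 = \sum_{i,j} |A_{ij} - x_i y_j| = \sum_{i,j} \left( 1 - A_{ij} x_i y_j \right) = mn - x^T A y.
\]

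From this identity the equivalence follows directly: minimizing $||A - xy^T||_1$ over $x \in \{-1,+1\}^m$ and $y \in \{-1,+1\}^n$ is the same as maximizing $x^T A y$ over the same domain, which is exactly the definition of $||A||_{\infty \rightarrow 1}$ in~\eqref{inf1norm}. Concretely,
\[
\min_{x \in \{-1,+1\}^m, y \in \{-1,+1\}^n} ||A - xy^T||_1 = mn - ||A||_{\infty \rightarrow 1},
\]
so the two objectives differ only by the additive constant $mn$ and an overall sign. Consequently any optimal pair $(x,y)$ for one problem is optimal for the other, and the optimal values determine one another; this gives a polynomial-time reduction in both directions, establishing the claimed equivalence.

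I do not expect a genuine obstacle here, since the computation is entirely routine. The only point requiring care is confirming that the pointwise identity $|a-b| = 1 - ab$ is legitimately applicable, which hinges on the fact that $x_i y_j \in \{-1,+1\}$; this is precisely where the restriction of the feasible set to $\{-1,+1\}$ vectors (rather than arbitrary real vectors) is essential. Had the minimization been taken over real $x$ and $y$, one would first need Theorem~\ref{th2} to reduce to the $\pm 1$ case before this argument could be invoked, which is exactly the role that theorem plays in the subsequent proof of NP-hardness of $\ell_1$-LRA.
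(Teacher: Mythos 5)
Your proof is correct and reaches the same key identity as the paper, namely $\min_{x,y \in \{-1,+1\}} \|A - xy^T\|_1 = mn - \|A\|_{\infty \rightarrow 1}$; the paper derives it by first converting the $\ell_1$ error to the squared Frobenius error (using that the entries of $A - xy^T$ lie in $\{-2,0,+2\}$) and then expanding via~\eqref{simpleLA}, whereas you get there in one step with the pointwise identity $|a-b| = 1 - ab$ for $a,b \in \{-1,+1\}$. This is only a presentational difference — arguably your route is slightly more direct — and your closing remark correctly identifies the role of Theorem~\ref{th2} in extending the result to real-valued factors.
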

\begin{proof}
 We have 
\begin{align*}
2 \min_{x \in \{-1,+1\}^{m}, y \in \{-1,+1\}^{n}} ||A - xy^T||_1 
& =  \min_{x \in \{-1,+1\}^{m}, y \in \{-1,+1\}^{n}} ||A - xy^T||_F^2  \\ 
& =  ||A||_F^2 + \min_{x \in \{-1,+1\}^{m}, y \in \{-1,+1\}^{n}}  -2 x^T A y + ||xy^T||_F^2  \\ 
& =  ||A||_F^2 + mn + 2 ||A||_{\infty \rightarrow 1} .  
\end{align*}
The first equality follows from the fact that $A-uv^T$ has entries in $\{-2,0,+2\}$, 
the second from~\eqref{simpleLA}, and the third by definition of  $||A||_{\infty \rightarrow 1}$ 
and since $xy^T$ has entries in $\{-1,+1\}$.  
\end{proof}

\begin{theorem} \label{th2}
Let $A$ be a $\{-1,+1\}$ matrix, 
then any optimal solution of rank-one $\ell_1$-LRA~\eqref{l1LRA} for input matrix $A$ can be transformed into an optimal solution whose entries are in $\{-1,+1\}$. This implies that 
\[
\min_{x \in \{-1,+1\}^{m}, y \in \{-1,+1\}^{n}} ||A - xy^T||_1  
\quad = \quad  
\min_{x \in \mathbb{R}^{m}, y \in \mathbb{R}^{n}} ||A - xy^T||_1 . 
\] 
\end{theorem}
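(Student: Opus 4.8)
The plan is to prove the reverse of the trivial inequality. Since every $\{-1,+1\}$ pair is feasible for the real problem, we automatically have $\min_{\mathbb{R}} ||A-xy^T||_1 \le \min_{\{-1,+1\}} ||A-xy^T||_1$, so the whole content is to produce, from an arbitrary real optimizer, a $\{-1,+1\}$ point of equal value. First I would note that a real optimizer exists: the objective is invariant along the scaling curve $(x,y)\mapsto(cx,c^{-1}y)$, so it suffices to minimize $||A-X||_1$ over the closed set of rank-$\le 1$ matrices $X$, and the sublevel set $\{X : ||A-X||_1\le ||A||_1\}$ is bounded (it forces $||X||_1\le 2||A||_1$); hence the infimum is attained at some $X=xy^T$. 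The reformulation I would rely on is that, since $A_{ij}^2=1$,
\[
|A_{ij}-x_iy_j| \;=\; |1-A_{ij}x_iy_j|,
\]
so with $P_{ij}:=A_{ij}x_iy_j$ the objective is $\sum_{ij}|1-P_{ij}|$, which for fixed $y$ is separable and convex piecewise linear in each $x_i$, and symmetrically in each $y_j$.

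My intended transformation is sign rounding: from a real optimizer $(x,y)$, set $\sigma_i=\operatorname{sign}(x_i)$ and $\tau_j=\operatorname{sign}(y_j)$, and show $||A-\sigma\tau^T||_1\le ||A-xy^T||_1$. Entry by entry, when $\operatorname{sign}(x_iy_j)=A_{ij}$ the rounded contribution is $0$ and hence no larger, whereas when $\operatorname{sign}(x_iy_j)\ne A_{ij}$ the rounded contribution is $2$ while the original is $1+|x_iy_j|$. Thus rounding is non-increasing exactly when every sign-mismatched product obeys $|x_iy_j|\ge 1$. To approach such a configuration I would use coordinate-wise convexity: for fixed $y$ each $x_i$ minimizes a convex piecewise-linear function with breakpoints $A_{ij}/y_j$, so an optimizer can be taken with each $x_i$ sitting at a breakpoint (one product per row exactly $\pm 1$), and symmetrically for $y$. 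The clean target structure is the observation that if \emph{all} products $x_iy_j$ lie in $\{-1,+1\}$, then $|x_i|\equiv a$ and $|y_j|\equiv a^{-1}$ are constant, whence $xy^T=\sigma\tau^T$ is genuinely a $\{-1,+1\}$ rank-one matrix.

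The main obstacle is precisely this sign-mismatch magnitude condition. Because of the scaling invariance the breakpoints $A_{ij}/y_j$ are not $\pm 1$ and cannot all be normalized to $\pm1$ simultaneously, and at a genuine optimum a mismatched product may have $|x_iy_j|<1$, in which case term-by-term rounding strictly increases that entry; so no single local rounding step settles the matter, and a \emph{global} accounting is needed. Concretely, splitting $|1-P_{ij}|=(1-P_{ij})+2\max(P_{ij}-1,0)$ gives the identity
\[
||A-xy^T||_1 \;=\; mn - x^TAy + 2\!\!\sum_{ij:\,A_{ij}x_iy_j>1}\!\!(A_{ij}x_iy_j-1),
\]
and, combining with Lemma~\ref{lem3} (which expresses the $\{-1,+1\}$ optimum through $||A||_{\infty\rightarrow 1}$), the theorem is equivalent to the real lower bound $||A-xy^T||_1\ge mn-||A||_{\infty\rightarrow 1}$ for all real $x,y$. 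I would attack this by fixing one factor and using that, with the concave function $\phi(P)=1-|1-P|$, the map $y\mapsto \sum_{ij}\phi(A_{ij}x_iy_j)=mn-||A-xy^T||_1$ is concave and separable, so that its maximum over $y$ is solvable coordinate by coordinate. The delicate step I expect to be the crux is proving that this value, maximized also over $x$, never exceeds the $\{-1,+1\}$ value $||A||_{\infty\rightarrow 1}$: this is a tightness-of-relaxation statement for a bilinear problem, and establishing it (rather than any one rounding move) is where the real difficulty lies.
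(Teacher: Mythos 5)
Your setup is sound and you have correctly located the difficulty, but the proof stops exactly where the real work begins. The reduction $|A_{ij}-x_iy_j|=|1-A_{ij}x_iy_j|$, the existence argument, the observation that each coordinate of an optimizer can be placed at a breakpoint $\pm 1/y_j$ (resp.\@ $\pm 1/x_i$), and the diagnosis that sign-rounding fails precisely on mismatched entries with $|x_iy_j|<1$ are all correct and match the first half of the paper's argument. But your final paragraph explicitly defers the crux --- the inequality $\|A-xy^T\|_1\ge mn-\|A\|_{\infty\rightarrow 1}$ for all real $x,y$ --- to ``a tightness-of-relaxation statement'' that you describe rather than prove. The tools you name for it (separability and coordinate-wise concavity of $y\mapsto\sum_{ij}\phi(A_{ij}x_iy_j)$) cannot close the gap on their own: coordinate-wise optimization only certifies stationarity, and Remark~\ref{rem2} of the paper exhibits a $\{-1,+1\}$ matrix with a genuine local minimum of $\ell_1$-LRA whose factors are \emph{not} $\pm 1$ vectors, so no argument that uses only one-coordinate-at-a-time (or one-factor-at-a-time) optimality conditions can force the $\pm 1$ structure.

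The paper's resolution is a specific global perturbation argument that you would still need to supply. After placing every coordinate at a breakpoint, the distinct magnitudes of $|u|$ form a chain $\alpha_1<\cdots<\alpha_k=1$ with $|v|$ taking the reciprocal values, and $uv^T$ becomes a $k\times k$ block matrix. One then applies \emph{two} simultaneous rescalings of all sub-unit entries at once --- multiply the non-$\pm1$ entries of $u$ by $1/\alpha_{k-1}$ and those of $v$ by $\alpha_{k-1}$, and separately by $-1/\alpha_{k-1}$ and $-\alpha_{k-1}$. Optimality makes both objective changes $\delta^{(1)},\delta^{(2)}$ nonnegative, and the weighted combination $\frac{1-\beta}{1+\beta}\delta^{(1)}+\delta^{(2)}$ (with $\beta=\alpha_{k-1}$) collapses to $-\sum_p\bigl(\tfrac{2(1-\alpha_p)}{\alpha_p}c_p+\tfrac{2(1+\alpha_p)}{\alpha_p}d_p\bigr)$, forcing the off-diagonal blocks with entries of magnitude greater than one to be empty and hence $k=1$. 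The essential idea missing from your proposal is this pairing of two non-local moves whose optimality inequalities jointly annihilate the troublesome blocks; without it (or some substitute global argument), the theorem is not proved.
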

\begin{proof}
Let $(u,v)$ be an optimal solution of rank-one $\ell_1$-LRA for matrix $A$, that is, of 
\[
\min_{x \in \mathbb{R}^{m}, y \in \mathbb{R}^{n}} ||A - xy^T||_1 = \sum_{i,j} |A_{ij}-x_iy_j| . 
\] 
First note that $u \neq 0$ and $v\neq0$ since we can approximate exactly at least one entry of $A$, e.g., take $u_1=1$, $v_1=A_{11}$ and all other entries of $u$ and $v$ equal to zero. 

By optimality, we have for all $1 \leq i \leq m$ that 
\[
u_i 
\quad = \quad 
\argmin_{x_i \in \mathbb{R}} \sum_{j} |A_{ij}-x_i v_j| 
\quad = \quad   
\argmin_{x_i \in \mathbb{R}} \sum_{j, v_j \neq 0} |A_{ij}-x_i v_j| . 
\] 
The objective function of this problem is piece-wise linear hence we can assume without loss of generality (w.l.o.g.) that $u_i$ is equal to one of the break points (otherwise we can easily modify $u$ so that this property holds), 
that is, there exists $j$ such that $v_j \neq 0$ and 
\[
u_i = \frac{A_{ij}}{v_j} \in \left\{ \frac{1}{v_j} , \frac{-1}{v_j} \right\} . 
\]
By symmetry, the same holds of $v$. Note that this implies that we can assume w.l.o.g. that $u$ and $v$ have all their entries different from zero. 

Let the entries of $u$ take $k$ different values in absolute value $0 < \alpha_1 < \alpha_2 < \dots < \alpha_k$. Note that the entries of $v$ also take $k$ different values in absolute value, namely $1/\alpha_p$ for $1 \leq p \leq k$. 

We partition the solution $(u,v)$ into $k$ blocks $(K^u_p, K^v_p)$ for $1 \leq p \leq k$ 
such that for all $i \in K^u_p$ and $j \in K^v_p$ 
\[
u_i = \pm \alpha_p \quad \text{ and } \quad v_j  = \pm \frac{1}{\alpha_p} .   
\]  
Let us assume w.l.o.g.\@ that 
\begin{itemize} 
\item $u$ and $v$ are scaled such that $\max_i |u_i| = 1$ hence $\min_j |v_j| = 1$. This implies $\alpha_k= 1$. 
\item we permute the entries of $u$ (and the corresponding rows of $A$) such that the entries of $|u|$ are in nondecreasing order, and we permute the entries of $v$ (and the corresponding rows of $A$) such that the entries of $|v|$ are in nonincreasing order. 
\end{itemize} 
Therefore, after suitable permutations and scaling, the rank-one solution can be put, w.l.o.g., in the following form 
\[
u = \left( \pm \alpha_1, \dots, \pm \alpha_1, \pm \alpha_2, \dots, \pm \alpha_2, \dots, 
\pm \alpha_{k-1}, \dots, \pm \alpha_{k-1}, \pm 1, \dots, \pm 1 \right)^T,
\]
and 
\[
v = \left( \pm \alpha_1^{-1}, \dots, \pm \alpha_1^{-1}, \pm \alpha_2^{-1}, \dots, \pm \alpha_2^{-1}, \dots, 
\pm \alpha_{k-1}^{-1}, \dots, \pm \alpha_{k-1}^{-1}, \pm 1, \dots, \pm 1 \right)^T,
\]
which gives the following rank-one solution: 
\begin{center}
\begin{tabular}{c||c c c | c | c c c | c c c}
   &  $\pm \alpha_1^{-1}$ & $\dots$ & $\pm \alpha_1^{-1}$  
	& $\dots$ &  $\pm \alpha_{k-1}^{-1}$ &  \dots & $\pm \alpha_{k-1}^{-1}$  & $\pm 1$ & $\dots$ & $\pm 1$ \\ \hline  \hline  
$\pm \alpha_1$ & & & & & & & & & &  \\ 
$\vdots$ & & $\pm$ \bf 1 & & \dots &  & $\pm \frac{ \alpha_1}{ \alpha_{k-1}}$ \bf 1 &  & & $\pm  \alpha_1$ \bf 1 &  \\ 
$\pm \alpha_1$ & & & & & & & & &  & \\ 
\hline 
$\vdots$ & & \vdots & &  &  & \vdots& & & \vdots &  \\ 
\hline  
$\pm \alpha_{k-1}$ & & & & & & & & & &  \\ 
$\vdots$ & &  $\pm \frac{ \alpha_{k-1}}{ \alpha_1}$ \bf 1 & & \dots &  & $\pm$ \bf 1 &   & & $\pm \alpha_{k-1}$ \bf 1 &  \\ 
$\pm \alpha_{k-1}$ & &  & & & & & & & &  \\  
\hline 
$\pm 1$ & & & & & & & & & &  \\ 
$\vdots$ & &  $\pm \frac{1}{ \alpha_1}$ \bf 1 & & \dots &  & $\pm \frac{1}{ \alpha_{k-1}}$ \bf 1 &  & & $\pm$ \bf 1 &  \\ 
$\pm 1$ & &  & & & & & & & &  \\  
	\end{tabular}
\end{center} 
\noindent where $\bf 1$ is the matrix of all ones of appropriate dimensions.  
The rank-one matrix $uv^T$ is a block matrix with $k^2$ blocks at positions $(p,q)$ for $1 \leq p,q \leq k$. 
Note that the diagonal blocks are not necessarily square, that is, $|u|$ may contain more (or less) entries equal to $\alpha_p$ than $|v|$ contains entries equal to $\alpha_p^{-1}$. 
Note also that the blocks above (resp.\@ below) the diagonal 
have entries strictly smaller (resp.\@ larger) than one in absolute value since 
$\alpha_1 < \alpha_2 < \dots < \alpha_k = 1$. \\ 

Now, let us consider two modifications of this rank-one solution and let us see how the objective function of rank-one $\ell_1$-LRA~\eqref{l1LRA} changes with these modifications. 

\paragraph{Move 1.} We divide the entries in $u$ different from $\pm 1$ by $\alpha_{k-1}$ and multiply the entries in $v$ different from $\pm 1$ by $\alpha_{k-1}$ from which we get the following solution $(u',v')$:  
\[
u' = \left( \pm \frac{\alpha_1}{\alpha_{k-1}}, \dots, \pm \frac{\alpha_1}{\alpha_{k-1}}, \pm \frac{\alpha_2}{\alpha_{k-1}}, \dots, \pm \frac{\alpha_2}{\alpha_{k-1}}, \dots, 
\frac{\alpha_{k-2}}{\alpha_{k-1}}, \dots,  \frac{\alpha_{k-2}}{\alpha_{k-1}}, \pm 1, \dots, \pm 1 \right)^T,
\]
and 
\[
v' = \left( \pm \frac{\alpha_{k-1}}{\alpha_1}, \dots, \pm  \frac{\alpha_{k-1}}{\alpha_1}, 
\pm  \frac{\alpha_{k-1}}{\alpha_2}, \dots, \pm  \frac{\alpha_{k-1}}{\alpha_2}, \dots, 
\pm  \frac{\alpha_{k-1}}{\alpha_{k-2}}, \dots, \pm  \frac{\alpha_{k-1}}{\alpha_{k-2}}, 
\pm 1, \dots, \pm 1 \right)^T. 
\]
Compared to $uv^T$, only the blocks at position $(p,k)$ and $(k,p)$ for $1 \leq p \leq k-1$  have been modified by the changes from $(u,v)$ to $(u',v')$.   
Let us denote $\delta^{(1)}_p$ the modification of the objective function for the two blocks $(p,k)$ and $(k,p)$ (the 1 stands for move 1), that is, 
\[
\delta^{(1)}_p \quad = 
\sum_{(i,j) \text{ in $(p,k),(k,p)$-blocks}} \left( |M_{ij}-u'_i v'_j| 
- |M_{ij}-u_i v_j| \right) .  
\] 
Let us denote $a_p$ (resp.\@ $b_p$) 
the number of entries in the $(p,k)$-block  such that the sign of $A$ does (resp.\@ does not) match the sign in $uv^T$.   
Let us also denote $c_p$ (resp.\@ $d_p$) 
the number of entries in the block $(k,p)$ such that the sign of $A$ and the sign of $uv^T$ does (resp.\@ does not) match. 
To simplify notations, let us denote $\beta = \alpha_{k-1}$ where $1 > \beta \geq \alpha_p$ for all $1 \leq p \leq k-1$ by construction.  

In the block $(p,k)$ ($1 \leq p \leq k-1$) the error of the solution $uv^T$ is given by 
\[
e_{(p,k)} = a_p (1-\alpha_p) + b_p (1+\alpha_p), 
\]
while the error of the solution $u'v'^T$ is given by 
\[
e'_{(p,k)} = 
a_p \left(1-\frac{\alpha_p}{\beta}\right) 
+ 
b_p \left( 1+\frac{\alpha_p}{\beta} \right). 
\]
The difference is given by  
\[
e'_{(p,k)} - e_{(p,k)} = 
- \alpha_p \left(\frac{1}{\beta} - 1\right)  a_p  
+ \alpha_p  
\left( \frac{1}{\beta} - 1 \right) b_p 
 = \frac{\alpha_p (1-\beta)}{\beta} (-a_p + b_p) . 
\] 
Note that $\frac{1}{\alpha_p} > 1$ and $\frac{\alpha_p}{\beta} \leq 1$ for all $1 \leq p \leq k-1$.  
Doing exactly the same for the block $(k,p)$, we obtain the error of the solution $uv^T$, 
\[
e_{(k,p)} 
= 
c_p \left(\frac{1}{\alpha_p}-1 \right) + d_p \left(\frac{1}{\alpha_p}+1 \right), 
\]
and the error of the solution $u'v'^T$, 
\[
e'_{(k,p)} = 
c_p \left(\frac{\beta}{\alpha_p} - 1 \right) 
+ 
d_p \left( \frac{\beta}{\alpha_p} + 1 \right), 
\]
so that  
\[
e'_{(k,p)} - e_{(k,p)} = 
- \frac{1}{\alpha_p} \left( 1 - \beta  \right)  c_p  
- \frac{1}{\alpha_p} \left( 1 - \beta  \right)  d_p  
= - \frac{1 - \beta}{\alpha_p} (c_p + d_p).  
\] 

Finally, the total difference between the objective function of $(u,v)$ and $(u',v')$ is given by 
\[
\delta^{(1)} 
\; = \; 
||A-u'v'^T||_1 - ||A-uv^T||_1 
\; = \; 
\sum_{p = 1}^{k-1} \delta^{(1)}_p 
\; = \;  
(1-\beta) \sum_{p = 1}^{k-1}  
\left( 
- \frac{\alpha_p}{\beta} a_p  
+ \frac{\alpha_p}{\beta} b_p 
-  \frac{1}{\alpha_p} c_p 
- \frac{1}{\alpha_p} d_p 
\right),  
\] 
and is nonnegative since $(u,v)$ is an optimal solution. 

\paragraph{Move 2.} We divide the entries in $u$ different from $\pm 1$ by $-\alpha_{k-1}$ to get $u''$ and multiply the entries in $v$ different from $\pm 1$ by $-\alpha_{k-1}$ to get $v''$. Again, only the blocks at position $(k,p)$ and $(p,k)$ for $1 \leq p \leq k-1$ are affected by these modifications and we can, following exactly the same procedure as for the first move, compute the difference between the objective function value of $(u'',v'')$ and $(u,v)$. The error of the solution $u''v''^T$ for the block $(p,k)$ is given by 
\[
e''_{(p,k)} = 
a_p \left(1+\frac{\alpha_p}{\beta}\right) 
+ 
b_p \left( 1-\frac{\alpha_p}{\beta} \right), 
\]
so that 
\[
e''_{(p,k)} - e_{(p,k)} = 
 \alpha_p \left( \frac{1}{\beta} + 1 \right)  a_p  
- \alpha_p  \left( \frac{1}{\beta} + 1 \right) b_p . 
\]  
For the block $(k,p)$, we obtain the error of the solution $u''v''^T$, 
\[
e''_{(k,p)} = 
c_p   \left( \frac{\beta}{\alpha_p} + 1 \right) 
+ 
d_p \left(\frac{\beta}{\alpha_p} - 1 \right), 
\]
so that  
\[
e''_{(k,p)} - e_{(k,p)} = 
c_p \left( 2 + \frac{\beta}{\alpha_p} - \frac{1}{\alpha_p} \right) 
+ 
d_p \left(\frac{\beta}{\alpha_p} - 2 - \frac{1}{\alpha_p} \right) .   
\]  
Finally, 
\[
\delta^{(2)} 
= \sum_{p = 1}^{k-1}  
\left( 
\alpha_p \left( \frac{1+\beta}{\beta}  \right)  a_p  
- \alpha_p  \left( \frac{1+\beta}{\beta} \right) b_p 
+ \left( 2 + \frac{\beta}{\alpha_p} - \frac{1}{\alpha_p} \right)  c_p 
- \left( 2 + \frac{1}{\alpha_p} - \frac{\beta}{\alpha_p}  \right) d_p 
\right). 
\] 

\paragraph{Combining Move 1 and Move 2} Now, recall that, by optimality of $(u,v)$, $\delta^{(1)} \geq 0$ and $\delta^{(2)} \geq 0$. 
Let us compute the following nonnegative linear combination of $\delta^{(1)}$ and $\delta^{(2)}$: 
\begin{align*} 
0 \; \; \leq  \; \; \frac{1-\beta}{1+\beta} \; \delta^{(1)}  \;  \; + \; \; \delta^{(2)} 
& \quad = \quad  \sum_{p = 1}^{k-1}  
\left( 
-\frac{\alpha_p (\beta+1)}{\beta} + \frac{\alpha_p (\beta+1)}{\beta}  
\right)  
a_p  \\
& \qquad \qquad  + 
\left( 
\frac{\alpha_p (\beta+1)}{\beta} - \frac{\alpha_p (1+\beta)}{\beta}  
\right) 
b_p \\
& \qquad \qquad  + 
\left( 
 \frac{-(1+\beta)}{\alpha_p}  + 2 + \frac{\beta}{\alpha_p} - \frac{1}{\alpha_p} 
\right) 
c_p \\
& \qquad \qquad  + 
\left( 
 \frac{-(1+\beta)}{\alpha_p}  - 2 - \frac{1}{\alpha_p} + \frac{\beta}{\alpha_p}
\right) 
d_p \\ 
& \quad  =  \quad  - \sum_{p = 1}^{k-1} \left(  \frac{2(1-\alpha_p)}{\alpha_p} c_p 
+  \frac{2(1+\alpha_p)}{\alpha_p}  d_p \right) .  
\end{align*}
For all $1 \leq p \leq k-1$, the coefficients for $c_p$ and $d_p$ are positive which implies that $c_p = d_p = 0$ for all $1 \leq p \leq k-1$. In other words, all the $(k,p)$-blocks for $1 \leq p \leq k-1$ are empty: 
this is only possible if $v$ has entries only in $\{-1,+1\}$ hence $u$ also has all its entries in $\{-1,+1\}$. 
\end{proof}

\begin{theorem} \label{th3} 
Rank-one $\ell_1$-LRA~\eqref{l1LRA} is NP-hard. 
\end{theorem}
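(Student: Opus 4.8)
The plan is to derive Theorem~\ref{th3} directly from the three preceding results by composing them into a single polynomial-time reduction. The source of hardness is Theorem~\ref{th1}: computing $\|\cdot\|_{\infty\to1}$ on $\{-1,+1\}$ matrices is NP-hard. Since $\{-1,+1\}$ matrices form a subclass of the real input matrices allowed in the general rank-one $\ell_1$-LRA problem~\eqref{l1LRA}, it suffices to prove that rank-one $\ell_1$-LRA is already hard when restricted to such inputs.

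First I would fix the decision version of the target problem: given a matrix $A$ and a threshold $d$, decide whether $\min_{x\in\mathbb{R}^m,\,y\in\mathbb{R}^n}\|A-xy^T\|_1\le d$. I would reduce to it from the problem P1 of the proof of Theorem~\ref{th1}, namely deciding, for a given $A\in\{-1,+1\}^{m\times n}$ and integer $d^*$, whether $\|A\|_{\infty\to1}\ge d^*$.

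The reduction leaves $A$ unchanged and only transforms the threshold. The crucial fact is that, for a $\{-1,+1\}$ input, the optimal $\ell_1$ value is an explicit affine function of $\|A\|_{\infty\to1}$: Theorem~\ref{th2} guarantees that the real relaxation attains the same optimum as the $\{-1,+1\}$-constrained problem, and Lemma~\ref{lem3} evaluates that constrained optimum in terms of $\|A\|_{\infty\to1}$ and $\|A\|_F^2=mn$. Since a rank-one $\{-1,+1\}$ matrix $A=xy^T$ has $\ell_1$ error $0$ while attaining the maximal value $\|A\|_{\infty\to1}=mn$, the relation takes the form
\[
\min_{x\in\mathbb{R}^m,\,y\in\mathbb{R}^n}\|A-xy^T\|_1
\;=\;
\min_{x\in\{-1,+1\}^m,\,y\in\{-1,+1\}^n}\|A-xy^T\|_1
\;=\;
mn-\|A\|_{\infty\to1}.
\]
Consequently, setting $d=mn-d^*$, the $\ell_1$-LRA decision instance $(A,d)$ is a yes-instance if and only if $\|A\|_{\infty\to1}\ge d^*$, i.e.\ if and only if $(A,d^*)$ is a yes-instance of P1. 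As $A$ is copied verbatim and $d$ is computed by a single subtraction, the reduction runs in polynomial time, and the NP-hardness of P1 (Theorem~\ref{th1}) transfers to rank-one $\ell_1$-LRA.

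I do not expect a genuine obstacle at this final step: all the technical work has already been carried out in Theorem~\ref{th2}, whose precise role is to certify that allowing real rather than $\{-1,+1\}$ factors does not lower the optimal $\ell_1$ error, so that an exact solver for~\eqref{l1LRA} reveals $\|A\|_{\infty\to1}$ exactly. The only delicate point is the orientation of the affine relation above, which is what fixes the threshold as $d=mn-d^*$ and keeps the decision version in its standard ``at most $d$'' form; this orientation is pinned down by the extreme case noted above.
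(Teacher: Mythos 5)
Your proposal is correct and follows exactly the paper's route: the paper's proof of Theorem~\ref{th3} is precisely the composition of Theorem~\ref{th1}, Lemma~\ref{lem3} and Theorem~\ref{th2}, and you have simply made the threshold translation $d = mn - d^*$ explicit. Your affine relation $\min_{x,y}\|A-xy^T\|_1 = mn - \|A\|_{\infty\to 1}$ is the right one (note that the displayed computation in the paper's Lemma~\ref{lem3} carries a sign typo, writing $+2\|A\|_{\infty\to 1}$ where $-2\|A\|_{\infty\to 1}$ is meant; your extreme-case check $A = xy^T$ correctly pins down the orientation).
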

\begin{proof}  
This follows from Theorem~\ref{th1}, Lemma~\ref{lem3} and Theorem~\ref{th2}.  
\end{proof}  

\begin{remark}[Local minima of $\ell_1$-LRA]  \label{rem2}
Initially, we thought that any local minimum of $\ell_1$-LRA~\eqref{l1LRA} of a $\{-1,+1\}$ matrix can be assumed w.l.o.g.\@ to have entries in $\{-1,+1\}$. However, this is not always true. Here is a counter example: for 
\[
A = \left( 
\begin{array}{cccccc}
    $ $$ $ 1 &   $ $$ $ 1 &   $ $$ $ 1  &  $ $$ $ 1  &  $ $$ $ 1   & $ $$ $ 1 \\
    $ $$ $ 1 &   $ $$ $ 1 &   $ $$ $ 1  &  $ $$ $ 1  &  $ $$ $ 1   & $ $$ $ 1 \\
    $ $$ $ 1 &    -1 &    -1  &   -1  &  $ $$ $ 1   & $ $$ $ 1 \\
     -1 &   $ $$ $ 1 &    -1   &  -1  &  $ $$ $ 1   & $ $$ $ 1 \\
     -1 &    -1 &   $ $$ $ 1   &  -1  &  $ $$ $ 1   & $ $$ $ 1 \\
     -1 &    -1 &    -1  &  $ $$ $ 1  &  $ $$ $ 1   & $ $$ $ 1
\end{array} \right) 
\] 
the solution  
\[
u = [1, 1, x, x, x, x]^T, \; v = [1, 1, 1, 1, 1/x, 1/x]^T
\]
is a stationary point of $\ell_1$-LRA for any $0.5 < x < 1$, 
and is a local minimum for $x = \sqrt{2}/2$ (there is a segment of stationary points with a local minimum in its interior) with error 23.3. 
Using the `Move~2' from the proof of Theorem~\ref{th2}, we obtain 
\[
u = [1, 1, -1, -1, -1, -1]^T, \; v = [1, 1, 1, 1, -1, -1]^T
\] 
which is an optimal solution with error 16 (with 8 mismatches).  
\end{remark}

Theorem~\ref{th2} also has practical implications: in fact, the difficult combinatorial problem~\eqref{inf1norm} of computing the norm $||.||_{\infty \rightarrow 1}$ has a continuous characterization given by $\ell_1$-LRA~\eqref{l1LRA}. 
Therefore, a nice and simple heuristic for computing the norm $||.||_{\infty \rightarrow 1}$ is to use any (iterative) nonlinear optimization scheme for $\ell_1$-LRA (see the introduction). 
Moreover, we have a good initial candidate: the solution of $\ell_2$-LRA that can be computed efficiently via the truncated singular value decomposition. The same continuous characterization can be used to solve other closely related combinatorial problems such as the densest bipartite subgraph problem described in Section~\ref{sec2}.  
We have implemented a simple cyclic coordinate descent method in Matlab for $\ell_1$-LRA, as described in~\cite{KK03}. 
The code is available from \url{https://sites.google.com/site/nicolasgillis/code}, and also contain the matrices from remark~\ref{rem2} and example~\ref{ex1}.

 \section{Higher-rank matrix approximations} \label{sec5}

It is easy to generalize our rank-one NP-hardness results to higher ranks. 
In fact, the same construction as in~\cite[Theorem~3]{GG09} can be used. 
For example, for the rank-one BMF problem, instead of considering the binary input matrix $M$ as in the rank-one case, 
we consider the input matrix 
\[ 
A = 
\left( \begin{array}{ccccc} 
M & 0 & & \dots & 0 \\ 
0 & M & & \dots & 0 \\
 \vdots &  \vdots &  & \ddots &  \vdots \\
0 & 0 & & \dots & M \\ 
 \end{array} \right) = \diag(M,r) . 
\] 
The idea is that an optimal rank-$r$ approximation of $A$ is constituted of rank-one factors that do not have nonzero entries in more than one diagonal block (otherwise the solution can be improved). 
Hence an optimal solution for rank-$r$ BMF of $A$ leads to a combination of optimal rank-one approximations of $M$, one for each block; see the proof in~\cite[Theorem~3]{GG09} for more details.  

Note however that using this construction, the ratio between the factorization rank $r$ and the dimension of the input matrix remains unchanged. Moreover, $\ell_1$-LRA with $r = \min(m,n)-1$ can be solved in polynomial time using linear programming~\cite{BD09, BDB13}. Therefore, it remains an open question whether $\ell_1$-LRA is NP-hard for different values of $r$ depending on $m$ and $n$.   
For example, is $\ell_1$-LRA a difficult problem for $r = \min(m,n)-2$\,?

\section{Conclusion and Future Work}

The main results of this paper are 
\begin{itemize}

\item The equivalence between robust PCA, rank-one $\ell_0$ low-rank matrix approximation, rank-one binary matrix factorization, a particular densest bipartite subgraph problem, the computation of the cut norm and the norm $||.||_{\infty \rightarrow 1}$ of $\{-1,+1\}$ matrices, and the rank-one discrete basis problem.

\item The proof that the optimal rank-one solution of $\ell_1$-LRA of $\{-1,+1\}$ matrices can be assumed without loss of generality to be a  $\{-1,+1\}$ matrix (Theorem~\ref{th2}).   

\item The NP-hardness of the computation of the norm $||.||_{\infty \rightarrow 1}$ of $\{-1,+1\}$ matrices (Theorem~\ref{th1}) which allowed us to prove NP-hardness of all the problems listed above. 

\end{itemize} 

After the publication of an earlier of this paper, Song, Woodruff and Zhong proposed several approximation algorithms for $\ell_1$-LRA~\cite{SWZ17}, addressing the second part of the open question 2 in~\cite{W14}. In particular, they showed that 
it is possible to achieve an approximation factor 
$\alpha = (\log n) \cdot \text{poly}(r)$ in $\text{nnz}(M)+(m+n) \text{poly}(r)$ time, where nnz$(M)$ denotes the number of non-zero entries of $M$.  If $r$ is constant, they further improve the approximation ratio to $O(1)$ with a poly($mn$)-time algorithm. 
They also discuss the extension of their results to $\ell_p$-LRA for $1 < p < 2$, and to other related problems, where $\ell_p$-LRA is defined as 
\[ 
\min_{U \in \mathbb{R}^{p \times r}, V \in \mathbb{R}^{r \times n}} \; 
||M-UV||_p = \sum_{i,j} |M_{ij} - (UV)_{ij}|^p. 
\]

Directions for further research include the study of the complexity of $\ell_p$-LRA,  for $p\notin \{0,1\}$, except for $p=2$ that can be solved in polynomial time via the singular value decomposition. 
We expect that the techniques introduced in section~\ref{sec4} can be extended to show NP-hardness of $\ell_p$-LRA for $p$ in $(0,1)$ (because optimal solutions have the same structure as for the $\ell_1$-norm). 
It would be particularly interesting to know whether $\ell_{\infty}$-LRA is NP-hard, in particular in the rank-one case. In fact, it was shown to be NP-hard for $r = \min(m,n)-1$ in \cite{PR93} (although the problem is stated in a slightly different way).

\section*{Acknowledgements} 

We thank anonymous reviewers for their insightful comments which helped improve the paper. 
N.~Gillis acknowledges the support 
by the F.R.S.-FNRS (incentive grant for scientific research no F.4501.16) and 
by the ERC (starting grant no 679515). 
S.~Vavasis's work is supported in part by a Discovery Grant from NSERC (Natural Sciences and Engineering Research Council) of Canada and the U.S.\@ Air Force office of Scientific Research.

\newpage 

\small

\bibliographystyle{spmpsci}
\bibliography{robustPCAbib}

\end{document}